\definecolor{newtext}{rgb}{0, 0, 0}
\definecolor{darkgreen}{rgb}{0.0, 0.5, 0.0}
\theoremstyle{plain}      
\newtheorem{theorem}{Theorem}[section]
\theoremstyle{definition} 
\newtheorem{definition}[theorem]{Definition}
\definecolor{high}{RGB}{255, 75, 75}    
\definecolor{low}{RGB}{32, 178, 170}   
\begin{document}

\title{ShuffleGate: Scalable Feature Optimization for Recommender Systems via Batch-wise Sensitivity Learning}

\author{Yihong Huang}
\affiliation{%
  \institution{Bilibili Inc.}
  \city{Shanghai}
  \country{China}
}
\email{hyh957947142@gmail.com}

\author{Chen Chu}
\authornote{Corresponding author.}
\affiliation{%
  \institution{Bilibili Inc.}
  \city{Shanghai}
  \country{China}
}
\email{chuchen.blueblues@gmail.com}

\author{Fan Zhang}
\affiliation{%
  \institution{Guangzhou University}
 \city{Guangzhou}
  \country{China}}
\email{fanzhang.cs@gmail.com}

\author{Liping Wang}
\affiliation{%
  \institution{East China Normal University}
 \city{Shanghai}
  \country{China}}
\email{lipingwang@sei.ecnu.edu.cn}

\author{Fei Chen}
\affiliation{%
  \institution{Bilibili Inc.}
  \city{Shanghai}
  \country{China}
}
\email{chenfei03@bilibili.com}

\author{Yu Lin}
\affiliation{%
  \institution{Bilibili Inc.}
  \city{Shanghai}
  \country{China}
}
\email{linyu03@bilibili.com}

\author{Ruiduan Li}
\affiliation{%
  \institution{Bilibili Inc.}
  \city{Shanghai}
  \country{China}
}
\email{ruidli1992@gmail.com}

\author{Zhihao Li}
\affiliation{%
  \institution{Bilibili Inc.}
  \city{Shanghai}
  \country{China}
}
\email{zhihao.lee@foxmail.com}

\renewcommand{\shortauthors}{Trovato et al.}


\begin{abstract} Feature optimization—specifically Feature Selection (FS) and Dimension Selection (DS)—is critical for the efficiency and generalization of large-scale recommender systems. While conceptually related, these tasks are typically tackled with isolated solutions that often suffer from ambiguous importance scores or prohibitive computational costs.

In this paper, we propose \textbf{ShuffleGate}, a unified and interpretable mechanism that estimates component importance by measuring the model's sensitivity to information loss. Unlike conventional gating that learns relative weights, ShuffleGate introduces a batch-wise shuffling strategy to effectively ``erase'' information in an end-to-end differentiable manner. This paradigm shift yields naturally polarized importance distributions, bridging the long-standing "search-retrain gap" and distinguishing essential signals from noise without complex threshold tuning.

Extensive experiments across four benchmarks validate that ShuffleGate consistently outperforms state-of-the-art methods in both Feature and Dimension Selection tasks. It achieves a 15$\times$ speedup over permutation baselines and demonstrates extreme scalability by processing 270M parameters in just 700 seconds. Finally, in a top-tier industrial deployment, it compressed input dimensions by 10$\times$, yielding a 91\% increase in training throughput while serving billions of daily requests without performance degradation.

\end{abstract}

\begin{CCSXML}
<ccs2012>
   <concept>
       <concept_id>10010147.10010257.10010321.10010336</concept_id>
       <concept_desc>Computing methodologies~Feature selection</concept_desc>
       <concept_significance>500</concept_significance>
       </concept>
   <concept>
       <concept_id>10010147.10010257.10010321.10010337</concept_id>
       <concept_desc>Computing methodologies~Regularization</concept_desc>
       <concept_significance>100</concept_significance>
       </concept>
   <concept>
       <concept_id>10002951.10003317.10003347.10003350</concept_id>
       <concept_desc>Information systems~Recommender systems</concept_desc>
       <concept_significance>300</concept_significance>
       </concept>
 </ccs2012>
\end{CCSXML}

\ccsdesc[500]{Computing methodologies~Feature selection}
\ccsdesc[100]{Computing methodologies~Regularization}
\ccsdesc[300]{Information systems~Recommender systems}

\keywords{Feature Selection, Recommender systems, Dimension Pruning}


\maketitle
\section{Introduction}

Large-scale deep recommender systems (DRS) have become the backbone of modern online services, utilizing distinct features ranging from user demographics to interaction history to capture complex preferences \cite{DRS-survey, youtubeDRS}. However, the relentless accumulation of features has led to bloated models with billions of parameters, raising critical concerns regarding inference latency, storage costs, and overfitting risks \cite{LightRec}. To mitigate these issues, \textit{feature optimization} has emerged as a crucial research area. This typically encompasses three granularities: \textit{Feature Selection} (FS) to remove redundant fields \cite{ERASE}, \textit{Dimension Selection} (DS) to assign adaptive embedding sizes \cite{Autodim}, and even fine-grained \textit{Embedding Compression} (EC) to sparsify individual parameters within the embedding table \cite{embed-compression}.

In the realm of structure selection (FS and DS), existing methodologies primarily fall into three categories. Early approaches relied on heuristic proxies (e.g., Lasso, Random Forest) to estimate feature importance offline \cite{rf, gbdt}. With the rise of deep learning, the focus shifted to differentiable architecture search, which optimizes learnable masks for FS \cite{autofield, lpfs} or searches embedding sizes for DS \cite{Autodim, SSEDS} in an end-to-end manner. Alternatively, permutation-based methods (e.g., SHARK \cite{shark}) measure importance by observing prediction drops after feature corruption, prioritizing interpretability.

\textbf{Challenge.} However, existing solutions face three fundamental challenges in large-scale industrial application.
\textbf{First, Lack of Unification.} These tasks are typically studied in isolation with specialized algorithms (e.g., Autodim for DS \cite{Autodim}, AutoField for FS \cite{autofield}), which fragments the optimization process and increases engineering complexity.
\textbf{Second, The "Gate-Weight Entanglement".} Conventional differentiable methods \cite{lpfs, autofield, adafs} suffer from a pathological gradient interaction: as regularization pushes a gate toward zero, the optimizer inevitably \textit{inflates} the underlying feature weights to compensate for the signal loss \cite{polar_pruning}. This "seesaw" dynamic prevents gates from truly collapsing, resulting in continuous distributions where weak but valid signals overlap with noise. Consequently, practitioners are forced to rely on fragile threshold tuning, which lacks theoretical guarantees and often degrades pruning accuracy.
\textbf{Third, Computational Costs.} While permutation-based methods \cite{shark} offer precise sensitivity analysis, their $O(N)$ inference complexity becomes prohibitive as the search space expands. When facing massive candidate sets—whether for numerous feature fields or fine-grained dimension selection—such iterative evaluation renders these methods computationally intractable.


To address these challenges, we propose \textbf{ShuffleGate}, a unified framework that redefines importance estimation through the lens of \textit{sensitivity learning}. Instead of learning a simple weight, ShuffleGate asks: \textit{"If this component's information is destroyed (via random shuffling), how much does the prediction suffer?"} We implement this via a novel batch-wise shuffling operation integrated into a differentiable gating network. If a feature component is redundant, the model remains robust to its shuffling, and the gate naturally converges to zero to minimize the regularization penalty.

ShuffleGate represents a paradigm shift from post-hoc analysis to end-to-end learning. While iterative permutation-based methods like SHARK \cite{shark} establish a robust baseline for interpretability, their computational cost inherently scales with the search space.
For instance, evaluating the standard 39 features on Criteo via SHARK requires over 2 hours (7,492 seconds). In contrast, ShuffleGate leverages gradient-based sensitivity to evaluate all 270 million embedding entries within 700 seconds. This efficiency breakthrough enables us to scale beyond coarse-grained feature selection to massive parameter-level optimization—a regime previously unattainable for permutation paradigms.


Another key advantage of our approach is the natural polarization of importance scores.
ShuffleGate acts as  a rigorous noise filter. It separates importance scores into two distinct regions: redundant components collapse towards zero (e.g., $<10^{-3}$), while informative features—even weak ones—are preserved in a high-confidence range (significantly $>0.5$). 
As visualized in Figure \ref{fig:mechanism_polarization}, this creates a clear "safety margin" between redundant and preserved components \cite{polar_pruning}, ensuring that only truly irrelevant noise is removed \cite{lpfs}. 
Crucially, this polarization bridges the estimation-deployment gap: the validation AUC during gate learning becomes a reliable proxy for final post-pruning performance.

We demonstrate the effectiveness of ShuffleGate primarily on Feature  and Dimension Selection, where it consistently outperforms state-of-the-art baselines across
varying compression ratios. Beyond these standard tasks, we push the limit of our method by applying it to fine-grained embedding entry pruning. While practical serving of sparse embedding tables requires specific engineering support \cite{embed-compression}, this "embedding compression" experiment serves as a rigorous stress test for precision. It validates that ShuffleGate can effectively scale to millions of parameters, identifying and removing 99.9\% of redundant parameters on Criteo while maintaining competitive performance. Finally, the method has been deployed in a top-tier industrial video recommendation platform, successfully serving billions of daily requests.  

In summary, our contributions are:

\begin{itemize}
\item 
\textbf{Unified Sensitivity Learning:} We redefine structure search as a differentiable sensitivity learning task. By measuring model's sensitivity to stochastic information blockage, ShuffleGate offers intrinsic interpretability and achieves a qualitative leap in efficiency—transforming prohibitive iterative permutation analysis into a parallel, end-to-end training operation.
    \item \textbf{Reliable Polarization:} We demonstrate that ShuffleGate naturally separates signal from noise. This polarization mitigates the risk of  pruning weak features by creating a clear decision boundary, simplifying the selection process.
    \item \textbf{SOTA \& Scalability:} Experiments show ShuffleGate outperforms baselines on FS and DS tasks  across varying compression ratios. Furthermore, we demonstrate extreme scalability by evaluating 270 million parameters in minutes and reporting successful industrial deployment.
\end{itemize}

\begin{figure}[t]
    \centering
    \includegraphics[width=1.05\linewidth]{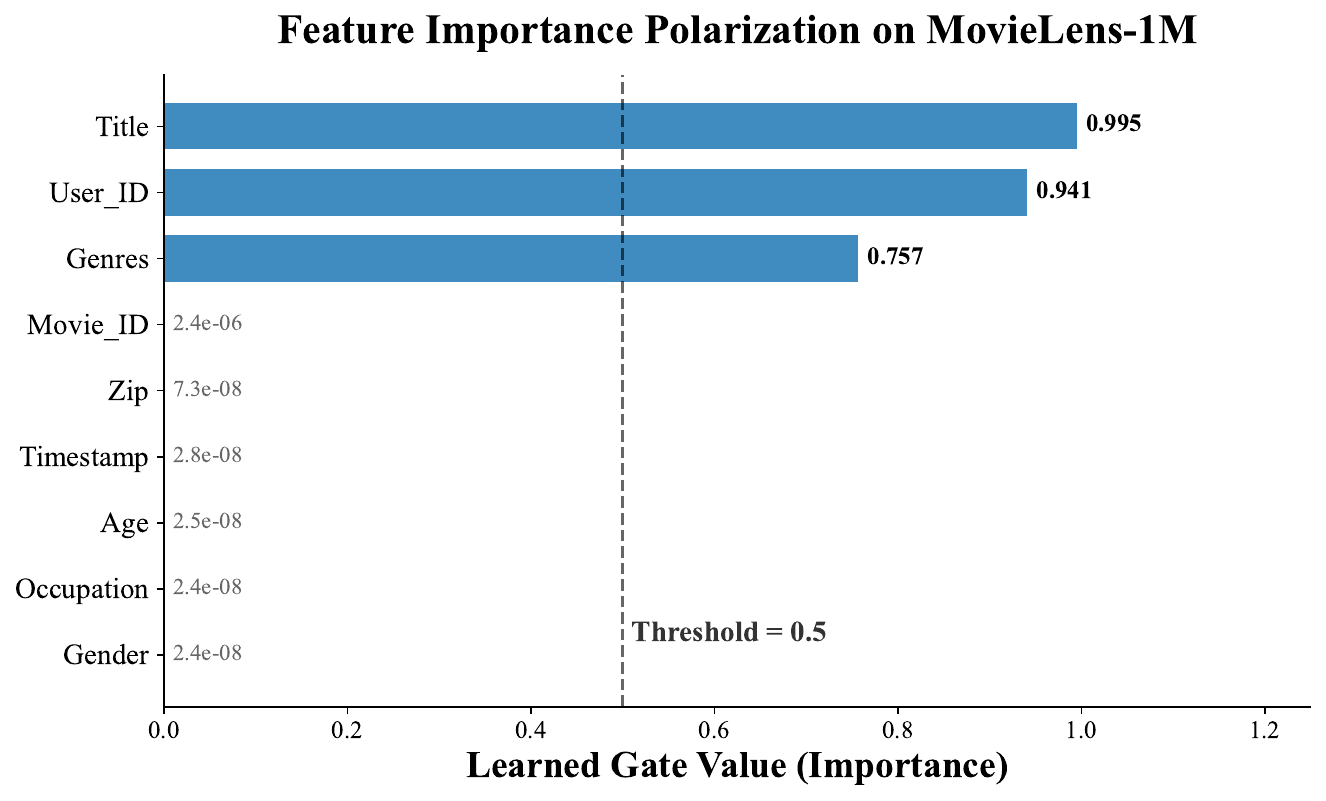}
    \caption{\textbf{Visualization of Polarization.} ShuffleGate learns a highly polarized distribution. Strong signals (blue) are kept high, while noise and redundant features are suppressed to near-zero, enabling a clear cut-off at the 0.5 threshold.}
    \label{fig:mechanism_polarization}
\end{figure}

\section{Related Work}
\label{sec:related_work}

\subsection{Feature Selection}
Feature selection (FS) is pivotal for deploying large-scale deep learning models in resource-constrained industrial environments.
Traditional FS methods often rely on statistical measures or tree-based models like Random Forest \cite{rf} and GBDT \cite{gbdt}. While interpretable, these approaches are generally model-agnostic and struggle to capture the complex, non-linear feature interactions inherent in deep learning models.

Current research focuses on differentiable mask-based methods, which integrate selection directly into deep model training. Representative methods like AutoField \cite{autofield} and LPFS \cite{lpfs} assign learnable gates to features and optimize them alongside model weights. However, these methods typically suffer from the coupling between masks and weights, where the optimizer inflates weights to compensate for shrinking masks \cite{polar_pruning}. This issue often prevents masks from truly collapsing to zero (lack of polarization) and causes inconsistency between the search and retrain stages. It is also worth noting that instance-wise gating approaches (e.g., AdaFS \cite{adafs}, LHUC \cite{lhuc}) dynamically re-weight features per sample. While effective for representation learning, they function as soft attention mechanisms rather than hard pruners, requiring the retention of all features during inference.

Another paradigm assesses importance by measuring the prediction sensitivity to feature corruption. Permutation Importance (PI) \cite{permutation} is the standard approach but is computationally prohibitive for high-dimensional inputs. SHARK \cite{shark} improves efficiency via Taylor approximation but still relies on an iterative "train-prune-retrain" framework. It is worth noting that interpretability methods like SHAP \cite{SHAP} also utilize feature perturbation. However, these methods focus on local interpretation for individual predictions, whereas ShuffleGate targets global importance for model pruning.

ShuffleGate unifies the strengths of both paradigms. By incorporating the shuffling mechanism into the differentiable training flow, we achieve simultaneous importance learning. The shuffled input decouples gates from weights, guaranteeing natural polarization and "One-Shot" efficiency without iterative retraining.

\subsection{Dimension Selection}
Beyond selecting entire feature fields, industrial applications often require finer-grained optimization, such as Dimension Selection (DS).

DS aims to assign varying embedding dimensions to different fields based on their importance. Existing DS methods like AutoDim \cite{Autodim} and SSEDS \cite{SSEDS} largely adapt the mask-based architecture (e.g., DARTS \cite{DARTS}) to the dimension level. Consequently, they inherit the same limitations of mask-based FS, such as the lack of polarization. ShuffleGate naturally extends to this task by treating each dimension chunk as a selectable unit, offering a robust solution for fine-grained structure search.

In the context of billion-scale recommender systems, DS serves as a form of pruning-based embedding compression. Unlike quantization \cite{FP16, Post4Bits,Int8/16,ALPT, MixedPrec} or hashing \cite{CompoEmb,ROBE,MemCom,BinaryCode,DoubleHash,DHE}, pruning physically removes redundant parameters. ShuffleGate belongs to this pruning category. It is orthogonal to quantization and hashing techniques and can be combined with them to achieve extreme compression rates.

\section{Preliminaries}
We consider a standard deep recommender system setup. Let $\mathcal{D} = \{(\mathbf{x}^{(j)}, y^{(j)})\}_{j=1}^N$ denote a dataset with $N$ samples. Each input $\mathbf{x}$ consists of $F$ feature fields. The embedding for field $i$ is $\mathbf{e}_i = \mathcal{E}_i[x_i]$, where $\mathcal{E}_i \in \mathbb{R}^{V_i \times d_i}$ is the embedding table. Concatenating all fields yields the dense input $\mathbf{e} = [\mathbf{e}_1, \dots, \mathbf{e}_F] \in \mathbb{R}^{D}$, where $D = \sum_{i=1}^F d_i$ represents the total input dimension. The model $f(\cdot; \Theta)$ takes $\mathbf{e}$ as input to generate predictions, and the parameters $\Theta$ are optimized to minimize the expected task-specific loss $\mathcal{L}_{\text{task}}$ (e.g., binary cross-entropy) over the dataset:
\begin{equation}
    \mathcal{J}(\Theta) = \mathbb{E}_{(\mathbf{x}, y) \sim \mathcal{D}} \left[ \mathcal{L}_{\text{task}}(y, f(\mathbf{e}; \Theta)) \right].
\end{equation}

\section{Methodology}

Evaluating feature importance via permutation (e.g., shuffling a feature and observing performance drop) is intuitive but computationally prohibitive for large-scale models \cite{permutation, shark}. To address this, we propose \textbf{ShuffleGate}, an end-to-end framework that integrates permutation-based sensitivity analysis directly into the training process.

\subsection{Unified Sensitivity Learning Framework}
\label{sec:unified_framework}

The core intuition of ShuffleGate is to measure the model's sensitivity to information loss at any granularity (field, dimension, or entry). Instead of physically removing a component (which alters the architecture) or masking it with zeros (which changes the distribution), we replace it with \textit{noise} drawn from its own marginal distribution.

Formally, let $\mathbf{z}$ denote a target component (e.g., a feature field vector or a scalar dimension) and $\tilde{\mathbf{z}}$ denote its shuffled noise counterpart. We introduce a learnable gate $g \in [0, 1]$ to control the information flow:
\begin{equation}
    \mathbf{z}^* = g \cdot \mathbf{z} + (1 - g) \cdot \operatorname{stopgrad}(\tilde{\mathbf{z}})
    \label{eq:unified_gate}
\end{equation}
where $\operatorname{stopgrad}(\cdot)$ prevents the model from learning to predict using the noise. The gate $g$ is parameterized by a learnable weight $\phi$ via sigmoid: $g = \sigma(\phi)$.

This mechanism forces the gate to learn the sensitivity: if $\mathbf{z}$ is redundant, replacing it with $\tilde{\mathbf{z}}$ causes negligible loss increase, and the regularization term will drive $g \to 0$. Conversely, if $\mathbf{z}$ is critical, the model must maintain $g \to 1$ to minimize the task loss. The unified objective is:
\begin{equation}
    \mathcal{L} = \mathcal{L}_{\text{task}} + \alpha \cdot \frac{1}{|\mathcal{S}|} \sum_{k \in \mathcal{S}} |g_k|
    \label{eq:unified_loss}
\end{equation}
where $\mathcal{S}$ is the set of all gates at the target granularity and $\alpha$ controls the sparsity pressure.  In incremental learning scenarios, we recommend a brief warm-up for newly introduced component before optimizing their gates to ensure fair competition with established components.

However, the practical feasibility of this framework hinges on generating the high-dimensional noise counterpart $\tilde{\mathbf{z}}$ efficiently online, which we address next.

\subsection{Scalable Noise Generation}
\label{sec:shuffle_op}

\begin{figure}[t]
    \centering
    \includegraphics[width=\linewidth]{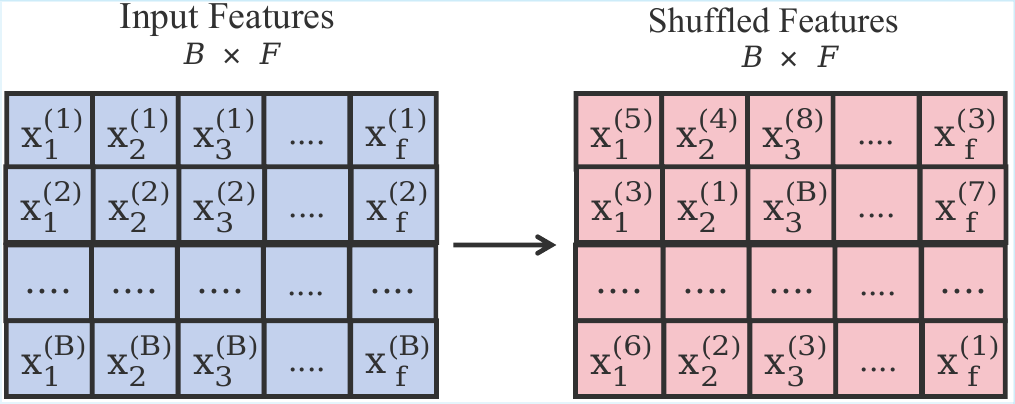}
    \caption{\textbf{Example of Batch-wise Shuffle Operation on Feature-Field Level.} Unlike global permutation, ShuffleGate permutes feature fields independently within a mini-batch.}
    \label{fig:shuffle_op}
\end{figure}

Efficiently generating the noise counterpart $\tilde{\mathbf{z}}$ is critical for end-to-end training. We introduce a \textbf{Batch-wise Shuffling} strategy that permutes data within the mini-batch to approximate the marginal distribution (See Figure \ref{fig:shuffle_op}).

\textbf{Mechanism.} Given a batch of input tensors $\mathbf{Z} \in \mathbb{R}^{B \times K}$, where $K$ represents the target granularity (i.e., $K=F$ for FS, $K=D$ for DS), we aim to break the feature-label correlation while preserving statistical properties. 
Instead of complex matrix multiplications, we implement this efficiently via a parallel \texttt{Gather} operation (Algorithm \ref{alg:efficient_shuffle}). Specifically, for each column $k \in \{1, \dots, K\}$, we generate a random permutation of indices along the batch dimension $B$ and reorder the elements independently.   With typically large batch sizes in industrial practice (e.g., $B \ge 8192$), this batch-wise strategy effectively approximates the global marginal distribution.

\textbf{Efficiency Analysis.} Unlike post-hoc permutation methods (e.g., SHARK) that require $O(N)$ inference passes for $N$ features, our approach integrates parallel sensitivity analysis directly into the forward pass. The complexity is $O(KB \log B)$, adding negligible overhead to the training loop.
This allows ShuffleGate to scale to millions of parameters, as demonstrated in our experiments.

Equipped with this scalable shuffling mechanism, we now demonstrate how to instantiate the unified framework across varying structural granularities.

\begin{algorithm}[t]
\small
\SetAlgoLined
\KwIn{$\mathbf{Z} \in \mathbb{R}^{B \times K}$: Input tensor}
\KwOut{$\tilde{\mathbf{Z}} \in \mathbb{R}^{B \times K}$: Shuffled tensor}
$\mathbf{R} \gets \text{Uniform}(0, 1)^{K \times B}$ \tcp*{Generate random keys}
$\boldsymbol{\pi} \gets \text{ArgSort}(\mathbf{R}, \text{dim}=1)$ \tcp*{Get permutation indices}
$\tilde{\mathbf{Z}} \gets \text{Gather}(\mathbf{Z}, \text{dim}=0, \text{index}=\boldsymbol{\pi}^T)$ \tcp*{Shuffle per column}
\Return{$\tilde{\mathbf{Z}}$}
\caption{Unified Batch-wise Shuffling}
\label{alg:efficient_shuffle}
\end{algorithm}

\subsection{Multi-Granularity Application}
\label{sec:applications}

The unified framework (Eq. \ref{eq:unified_gate}) can be seamlessly instantiated at three granularities.

\textbf{1. Feature Selection (Field-level).}
We apply gates to entire feature fields. Let $\mathbf{X}_i$ be the input indices for field $i$. The shuffled input $\tilde{\mathbf{X}}_i$ is generated by permuting the batch indices. The gated embedding is $\mathbf{e}_i^* = g_i \mathcal{E}_i[\mathbf{X}_i] + (1-g_i) \operatorname{stopgrad}(\mathcal{E}_i[\tilde{\mathbf{X}}_i])$, where scalar $g_i$ indicates field importance.

\textbf{2. Dimension Selection (Dimension-level).}
We apply gates to individual embedding dimensions. For the concatenated embedding matrix $\mathbf{E} \in \mathbb{R}^{B \times D}$, we generate $\tilde{\mathbf{E}}$ by shuffling each dimension column independently. The learned gate vector $\mathbf{g} \in \mathbb{R}^D$ then indicates  importance of each dimension.

\textbf{3. Micro-Granularity Stress Test (Entry-level).}
To evaluate the precision limits of ShuffleGate, we apply it to individual embedding entries. We assign a gate matrix $\mathbf{G}_i \in \mathbb{R}^{V_i \times d_i}$ to each embedding table. The active gate for a batch is retrieved via lookup: $\mathbf{g}_{\text{batch}} = \mathbf{G}_i[\mathbf{X}_i]$. 
 While serving sparse embedding tables requires specialized engineering \cite{embed-compression}, we utilize this setting primarily as a stress test. By attempting to prune millions of parameters individually (e.g., Criteo dataset), we verify whether ShuffleGate can distinguish signal from noise in extreme search spaces.

\subsection{Configuration and Pruning Strategies}
\label{sec:implementation}
A distinguishing feature of ShuffleGate is its natural \textbf{Polarization Effect}. Unlike traditional mask-gated based methods \cite{autofield,sfs,lpfs} which indiscriminately compress weights, ShuffleGate creates a "Discriminative Margin" where redundant gates collapse to zero while informative ones are pushed towards one.
Theoretical justifications for polarization effect are provided in
Appendix \ref{sec:theoretical_analysis}.
The polarization of gate values serves as a reliable indicator for hyperparameter configuration. In this section, we detail how to leverage this property to intuitively tune $\alpha$ and deploy pruning strategies tailored to varying industrial constraints.

\subsubsection{Efficient Hyperparameter Configuration}
\label{sec:tuning_efficiency}

Contrary to black-box hyperparameter optimization, tuning $\alpha$ in ShuffleGate is transparent and efficient due to two inherent properties:

   (1) \textbf{Immediate Feedback:} Engineers can monitor the mean gate value curve during early training steps. If the mean remains stuck at the initialization value (typically 1.0), it indicates $\alpha$ is too small to trigger regularization. One only needs to increase $\alpha$ until the curve starts to drop, confirming the mechanism is active.
    
    (2) \textbf{Monotonic Response:} The relationship between $\alpha$ and model sparsity is strictly monotonic. This turns hyperparameter search into a simple, deterministic direction-finding process rather than a random trial.

\subsubsection{The "WYSIWYG" Property}
As illustrated in Figure \ref{fig:wysiwyg}, due to the sharp polarization of gate values, the performance of the model equipped with the active ShuffleGate module (measured on validation data during training) closely mirrors its performance after physical pruning (Retrain AUC). 

This \textbf{"What-You-See-Is-What-You-Get" (WYSIWYG)} property eliminates the "search-retrain gap" commonly observed in conventional two-stage selection methods \cite{autofield, shark, sfs, Autodim, SSEDS, rf}. It allows practitioners to reliably assess the final pruned model quality by simply monitoring the Gate Learning AUC (i.e., the validation AUC of the gate-equipped model), avoiding expensive "train-prune-verify" cycles.

\subsubsection{Dual Pruning Strategies}
Leveraging the WYSIWYG property, we propose two pruning strategies based on different tuning logics for $\alpha$:

   (1) \textbf{Quality-First (Threshold-based):} Ideal for seeking the theoretically optimal model structure. Here, $\alpha$ acts as a sensitivity threshold relative to the loss. Tuning is intuitive: if the Gate Learning AUC drops (indicating useful signals are suppressed), decrease $\alpha$; if the AUC saturates but sparsity is insufficient (indicating noise remains), increase $\alpha$. Once balanced, the natural threshold of 0.5 effectively separates signal from noise.

    (2) \textbf{Budget-First (Rank-based):} Ideal for strict resource budgets (e.g., fixed FLOPs/memory). Here, the setting of $\alpha$ is highly forgiving. We do not need $\alpha$ to induce a specific sparsity level; rather, we only ensure it triggers broad polarization (e.g., anywhere between roughly 25\%--99\% of gates pushed below 0.5). Crucially, empirical observations show that the relative importance ranking remains consistent across varying $\alpha$ magnitudes. This eliminates the need for fine-grained grid search: as long as polarization is activated, the ranking is reliable. We can then directly truncate the Top-K features to satisfy precise compression targets (e.g., 50\% or 25\%) without retraining or re-searching. This strategy is adopted in our experiments to evaluate performance at fixed compression ratios.

\begin{figure}[t]
    \centering
    \includegraphics[width=0.9\linewidth]{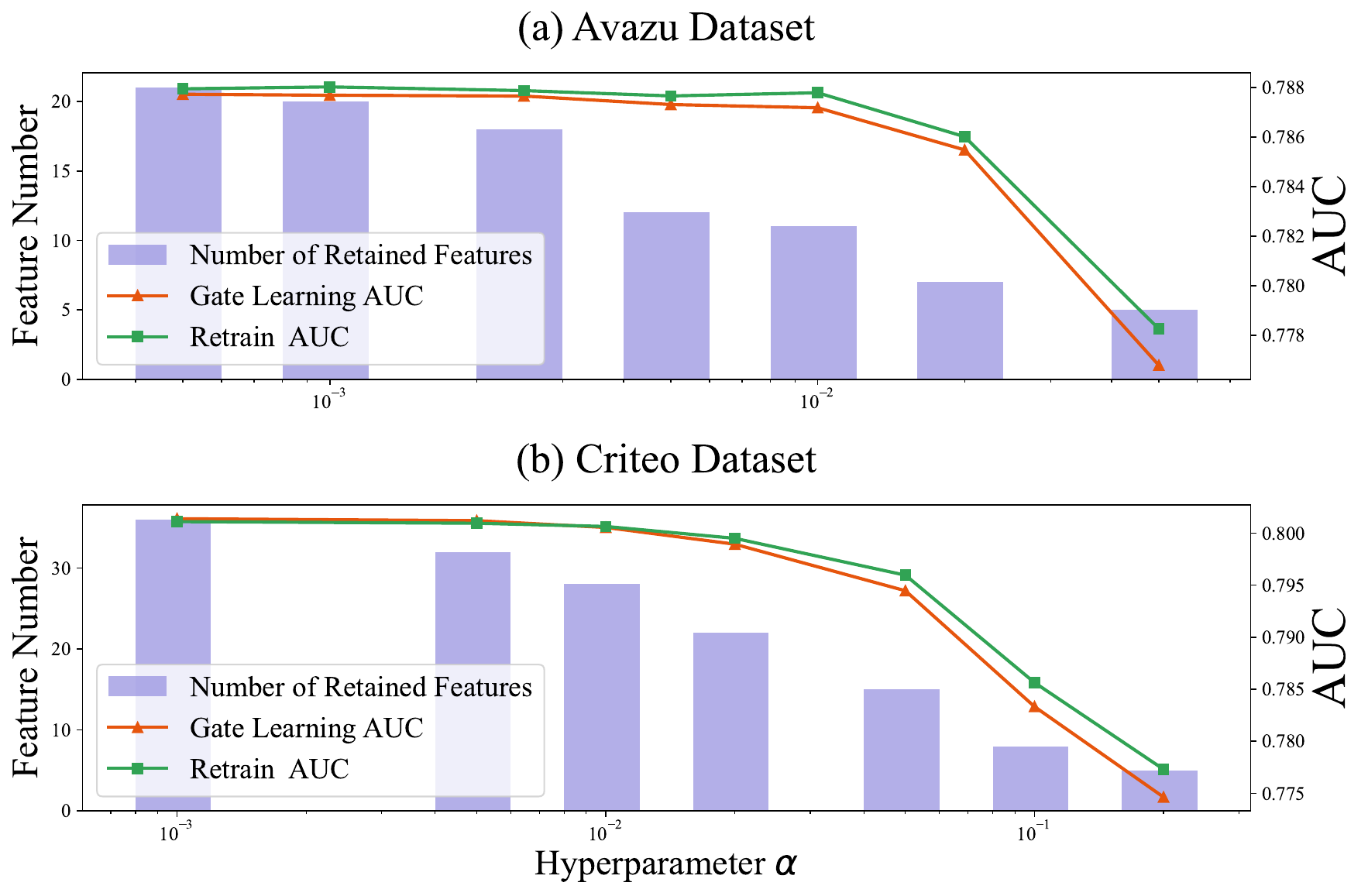} 
    \caption{\textbf{The WYSIWYG Property.} The AUC during the gate learning phase (\textbf{Gate Learning AUC}) exhibits a strong correlation with the AUC after pruning (Retrain AUC). This allows for reliable performance estimation without retraining.}
    \label{fig:wysiwyg}
\end{figure}

\subsubsection{Efficient Deployment}
Once the target sub-network is identified, ShuffleGate enables a highly efficient deployment pipeline:

(1) \textbf{Physical Pruning:} We physically reconstruct the model by removing redundant embedding columns and input nodes from the computation graph. This translates theoretical sparsity into real-world memory and latency reductions.
    
    (2) \textbf{Fine-tuning:} Unlike traditional methods that require retraining from scratch, the parameters retained by ShuffleGate are already jointly optimized. Therefore, the pruned model typically requires only a brief fine-tuning phase (warm start) to recover peak performance, significantly reducing the computational cost of model iteration.

\section{Experiments}
\label{sec:experiments}

We evaluate ShuffleGate on four public recommendation benchmarks to demonstrate its superiority over existing state-of-the-art methods. Our experiments are designed to answer four key research questions:
\begin{itemize}
    \item \textbf{RQ1 (Effectiveness):} Does ShuffleGate achieve state-of-the-art performance on standard Feature Selection (FS) and Dimension Selection (DS) tasks?
    \item \textbf{RQ2 (Scalability):} Can ShuffleGate efficiently handle massive-scale recommendation models in terms of both computational speed (vs. Permutation methods) and parameter scalability (extreme compression on 270M parameters)?
    \item \textbf{RQ3 (Industrial Application):} How does ShuffleGate perform in billion-scale industrial production environments?
\end{itemize}
We also provide a detailed case study to further illustrate the mechanism of the polarization effect in Appendix \ref{sec:mechanism}, which demonstrates that the polarization effect truly separates signal from noise without discarding weak features.

Our source code and supplementary PDF can be accessed at \url{https://anonymous.4open.science/r/Anonymous-ShuffleGate-F2DD/}.

\subsection{Experimental Setup}

\subsubsection{Datasets}
We utilize four widely used datasets: \textbf{Criteo}\footnote{\url{https://ailab.criteo.com/ressources/}}, \textbf{Avazu}\footnote{\url{https://www.kaggle.com/competitions/avazu-ctr-prediction}}, \textbf{MovieLens-1M}\footnote{\url{https://grouplens.org/datasets/movielens/1m/}}, and \textbf{AliCCP}\footnote{\url{https://tianchi.aliyun.com/dataset/408}}. Table \ref{tab:dataset} summarizes their statistics. We follow the data processing pipeline provided by ERASE \cite{ERASE}.

\begin{table}[h]
    \centering
    \caption{Dataset statistics.}
    \label{tab:dataset}
    \small
    \setlength{\tabcolsep}{4pt}
    \begin{tabular}{lcccc}
        \toprule
        \textbf{Dataset} & \textbf{Avazu} & \textbf{Criteo} & \textbf{ML-1M} & \textbf{AliCCP} \\
        \midrule
        \textbf{Samples} & 40,428,967 & 45,850,617 & 1,000,209 & 85,316,519 \\
        \textbf{Label} & Click & Click & Rating (1-5) & Click \\
        \textbf{Fields} & 23 & 39 & 9 & 23 \\
        \bottomrule
    \end{tabular}
\end{table}

\subsubsection{Evaluation Metrics}
We measure model performance using AUC. To aggregate results across datasets with varying difficulties, we adopt the Normalized AUC ($S_{\text{AUC}}$) metric following \cite{S_auc}:
\begin{equation}
S_\text{AUC}(A) = \frac{1}{|\Gamma|}\sum_{\mathcal{D} \in \Gamma} \frac{\text{AUC}(A,\mathcal{D})}{\max_{A' \in \mathcal{A}} \text{AUC}(A',\mathcal{D})}
\label{eq:s_auc}
\end{equation}
where $\Gamma$ is the set of datasets and $\mathcal{A}$ is the set of all baseline methods.
Compression efficiency is quantified by two metrics: \textbf{Feature Ratio (FR)}, which denotes the ratio of the number of retained feature fields to the total number of input fields; and \textbf{Dimension Ratio (DR)}, which denotes the ratio of the summed embedding dimensions after pruning to the original total embedding size.

\subsubsection{Baselines}
We compare ShuffleGate against comprehensive state-of-the-art methods:
\begin{itemize}
    \item \textbf{Feature Selection (FS):} (1) \textit{Heuristic}: Lasso \cite{lasso}, Random Forest (RF) \cite{rf}, XGBoost \cite{xgboost}; (2) \textit{Mask-based}: AutoField \cite{autofield}, LPFS \cite{lpfs}, SFS \cite{sfs}; (3) \textit{Permutation}: SHARK \cite{shark}.
    \item \textbf{Dimension Selection (DS):} (1) DimReg \cite{DimReg}; (2) SSEDS \cite{SSEDS}; (3) AutoDim \cite{Autodim}.
\end{itemize}

\noindent\textbf{Baseline Implementation Details.}
For FS baselines, we follow the optimized settings from ERASE \cite{ERASE}. For DS baselines, we reproduce them based on original papers: AutoDim uses \texttt{update\_frequency=10} and DARTS-style updates; DimReg sets the polarization weight $t=1.7$. We perform grid search for the rest hyperparameters where applicable to ensure fair comparison.

\subsubsection{Protocol \& Training Details}
We adopt a standard two-stage protocol \cite{ERASE} for all methods:
\begin{enumerate}
    \item \textbf{Search Stage:} The model is trained with the selection method to learn importance scores.
    \item \textbf{Retrain Stage:} We prune the model to target ratios and retrain from scratch.
\end{enumerate}
We evaluate two settings for Feature Selection: Feature Ratio, \textbf{FR $\in \{50\%, 25\%\}$}; and three settings for Dimension Selection: Dimension Ratio,  \textbf{DR $\in \{50\%, 25\%, 10\%\}$}. For the stress test (Embedding Compression), we explore an extreme ratio down to 0.1\%.

\noindent\textbf{Backbone Settings.} As noted in \cite{ERASE}, the choice of backbone does not significantly influence the results. In this case, all experiments use WideDeep \cite{widedeep} as the backbone with an embedding dimension of 8. We also provide additional results using the DCN \cite{dcn}  backbone in Appendix \ref{bb-study}. Optimization uses Adam (lr=0.001). We use an 8:1:1 split for training/validation/test and report averages over four runs to ensure stability. 

\noindent \textbf{ShuffleGate Configuration.} Adopting the Budget-First strategy (Sec. \ref{sec:implementation}), we tune $\alpha$ to induce sufficient polarization (approx. 50\% gates $<0.5$) and perform a one-shot search to truncate features or dimensions at various target ratios using the learned ranking. The specific $\alpha$ settings are: MovieLens-1M (0.1), AliCCP (0.001), Avazu (0.005), and Criteo (0.02).

\subsection{Main Results: Effectiveness (RQ1)}
\begin{table*}[t]
\centering
\caption{Feature Selection Results. Baselines are categorized by their underlying mechanism. Best results are highlighted in \textcolor{red}{\textbf{red}}. ShuffleGate consistently outperforms baselines across categories. On ML-1M, all top-performing methods converged to the same feature subset, resulting in identical AUC scores.}
\label{tab:fs_widedeep}
\small
\setlength{\tabcolsep}{3.5pt}
\begin{tabular}{l|c|cccc|c||cccc|c}
\toprule
\multirow{2}{*}{\textbf{Method}} & \multirow{2}{*}{\textbf{Category}} & \multicolumn{5}{c||}{\textbf{Target Retention Ratio (FR) $\approx$ 50\%}} & \multicolumn{5}{c}{\textbf{Target Retention Ratio (FR) $\approx$ 25\%}} \\
\cmidrule{3-12}
 & & \textbf{Criteo} & \textbf{Avazu} & \textbf{AliCCP} & \textbf{ML-1M} & $\mathbf{S_{\text{AUC}}}$ & \textbf{Criteo} & \textbf{Avazu} & \textbf{AliCCP} & \textbf{ML-1M} & $\mathbf{S_{\text{AUC}}}$ \\
\midrule
\textit{No\_Select} & - & 0.8014 & 0.7882 & 0.6598 & 0.7950 & 0.9954 & 0.8014 & 0.7882 & 0.6598 & 0.7950 & 0.9960 \\
\midrule
Lasso & Heuristic & 0.7462 & 0.7088 & 0.6054 & 0.6484 & 0.8872 & 0.7003 & 0.6016 & 0.5804 & 0.5286 & 0.7928 \\
XGBoost & Heuristic & 0.7682 & 0.7412 & 0.6499 & \textcolor{red}{\textbf{0.8097}} & 0.9710 & 0.7157 & 0.7091 & 0.5869 & 0.7338 & 0.8977 \\
RF & Heuristic & 0.7921 & 0.7869 & 0.6549 & 0.7924 & 0.9895 & 0.7626 & 0.7637 & 0.6032 & 0.6946 & 0.9237 \\
\midrule
SHARK & Permutation & 0.7974 & \textcolor{red}{\textbf{0.7870}} & 0.6582 & \textcolor{red}{\textbf{0.8097}} & 0.9977 & 0.7717 & \textcolor{red}{\textbf{0.7723}} & 0.6461 & \textcolor{red}{\textbf{0.8078}} & 0.9805 \\
\midrule
LPFS & Mask-based & 0.7944 & 0.7728 & 0.6554 & \textcolor{red}{\textbf{0.8097}} & 0.9913 & 0.7609 & 0.7714 & 0.6423 & \textcolor{red}{\textbf{0.8078}} & 0.9754 \\
SFS & Mask-based & 0.7969 & 0.7827 & 0.6579 & 0.7946 & 0.9915 & 0.7569 & 0.7685 & 0.6452 & \textcolor{red}{\textbf{0.8078}} & 0.9733 \\
AutoField & Mask-based & 0.7974 & 0.7869 & 0.6567 & 0.8077 & 0.9965 & 0.7738 & 0.7692 & 0.6485 & 0.8068 & 0.9808 \\
\midrule
\textbf{ShuffleGate} & \textbf{Ours} & \textcolor{red}{\textbf{0.7984}} & \textcolor{red}{\textbf{0.7870}} & \textcolor{red}{\textbf{0.6583}} & \textcolor{red}{\textbf{0.8097}} & \textcolor{red}{\textbf{0.9981}} & \textcolor{red}{\textbf{0.7841}} & \textcolor{red}{\textbf{0.7723}} & \textcolor{red}{\textbf{0.6489}} & \textcolor{red}{\textbf{0.8078}} & \textcolor{red}{\textbf{0.9854}} \\
\bottomrule
\end{tabular}
\end{table*}

\begin{table*}[t]
\centering
\caption{Dimension Selection Results. ShuffleGate demonstrates superior robustness, especially at extreme compression rates (Target DR=10\%), where competitors like DimReg degrade significantly. $^{\ddagger}$AutoDim lacks flexibility to achieve arbitrary target ratios, resulting in an average retention of $\approx$77\% in our experiments.}
\label{tab:ds_widedeep}
\resizebox{\textwidth}{!}{
\setlength{\tabcolsep}{3pt}
\begin{tabular}{l|cccc|c||cccc|c||cccc|c}
\toprule
\multirow{2}{*}{\textbf{Method}} & \multicolumn{5}{c||}{\textbf{Target DR $\approx$ 50\%}} & \multicolumn{5}{c||}{\textbf{Target DR $\approx$ 25\%}} & \multicolumn{5}{c}{\textbf{Target DR $\approx$ 10\%}} \\
\cmidrule{2-16}
 & \textbf{Criteo} & \textbf{Avazu} & \textbf{AliCCP} & \textbf{ML-1M} & $\mathbf{S_{\text{AUC}}}$ & \textbf{Criteo} & \textbf{Avazu} & \textbf{AliCCP} & \textbf{ML-1M} & $\mathbf{S_{\text{AUC}}}$ & \textbf{Criteo} & \textbf{Avazu} & \textbf{AliCCP} & \textbf{ML-1M} & $\mathbf{S_{\text{AUC}}}$ \\
\midrule
\textit{No\_Select} & 0.8014 & 0.7882 & 0.6598 & 0.7950 & 0.9975 & 0.8014 & 0.7882 & 0.6598 & 0.7950 & 0.9978 & 0.8014 & 0.7882 & 0.6598 & 0.7950 & 0.9954 \\
\midrule
AutoDim$^{\ddagger}$ & 0.8009 & \textcolor{red}{\textbf{0.7878}} & 0.6589 & 0.7943 & 0.9966 & - & - & - & - & - & - & - & - & - & - \\
DimReg & 0.8010 & 0.7876 & 0.6594 & 0.7943 & 0.9968 & 0.7996 & 0.7815 & 0.6524 & 0.8016 & 0.9944 & 0.7931 & 0.7448 & 0.6005 & 0.7983 & 0.9576 \\
SSEDS & 0.8006 & 0.7875 & 0.6590 & 0.7933 & 0.9962 & 0.7991 & 0.7856 & 0.6569 & 0.7923 & 0.9943 & 0.7880 & 0.7721 & 0.6487 & 0.7972 & 0.9826 \\
\textbf{ShuffleGate} & \textcolor{red}{\textbf{0.8011}} & \textcolor{red}{\textbf{0.7878}} & \textcolor{red}{\textbf{0.6595}} & \textcolor{red}{\textbf{0.8031}} & \textcolor{red}{\textbf{0.9996}} & \textcolor{red}{\textbf{0.8000}} & \textcolor{red}{\textbf{0.7868}} & \textcolor{red}{\textbf{0.6571}} & \textcolor{red}{\textbf{0.8021}} & \textcolor{red}{\textbf{0.9981}} & \textcolor{red}{\textbf{0.7946}} & \textcolor{red}{\textbf{0.7802}} & \textcolor{red}{\textbf{0.6543}} & \textcolor{red}{\textbf{0.8099}} & \textcolor{red}{\textbf{0.9932}} \\
\bottomrule
\end{tabular}
}
\end{table*}

We evaluate the performance of ShuffleGate against different categories of baselines on Feature Selection (FS) and Dimension Selection (DS).
For different retention ratios (e.g., 50\%, 25\%) in Tables \ref{tab:fs_widedeep} and \ref{tab:ds_widedeep}, ShuffleGate utilizes the same importance ranking derived from a single search process, truncating strictly based on the budget constraint.

\subsubsection{Feature Selection Performance}
Table \ref{tab:fs_widedeep} summarizes the FS performance on the WideDeep backbone under two retention ratios (FR=50\% and 25\%).

\noindent \textbf{Overall Dominance.} ShuffleGate achieves the highest aggregate $S_{\text{AUC}}$ across all settings, demonstrating superior generalization across diverse data distributions.

\textbf{Vs. Traditional Heuristics (Offline Proxies).} Methods like Lasso, RF, and XGBoost rely on shallow models (linear or tree-based) as offline proxies to estimate feature importance. However, deep recommender systems rely heavily on complex, non-linear embedding interactions that shallow proxies fail to capture. Consequently, these methods struggle on high-dimensional datasets like Criteo, lagging significantly behind end-to-end approaches.

\textbf{Vs. Mask-based Methods (Magnitude-based Fallacy).} 
While AutoField and LPFS are end-to-end, they fundamentally operate as \textit{magnitude-based} approaches—assuming that "smaller gate values imply less importance." We argue this assumption is flawed for deep models, where features with small weights can still be critical for non-linear interactions ("the butterfly effect"). ShuffleGate avoids this fallacy by measuring \textit{sensitivity} (loss degradation upon shuffling) rather than magnitude. This advantage is evident on the challenging Criteo dataset, where ShuffleGate significantly outperforms AutoField and LPFS.

\textbf{Vs. Permutation Method (Greedy vs. Joint).} 
Both SHARK and ShuffleGate belong to the \textit{Sensitivity-based} category, thus both achieving top-tier accuracy. However, ShuffleGate marginally outperforms SHARK on Criteo and AliCCP. We attribute this to the optimization strategy: SHARK evaluates features greedily (one-by-one), potentially missing coupled feature interactions, whereas ShuffleGate performs joint optimization of all gates during training, capturing holistic feature importance.

\textbf{"Less is More" on MovieLens.} On MovieLens-1M, ShuffleGate (AUC 0.8078) significantly outperforms the full-feature baseline (AUC 0.7950) even with 75\% features removed. This confirms that ShuffleGate accurately identifies and removes negative-gain noise features (e.g., timestamps) that lead to overfitting.

\subsubsection{Dimension Selection Performance}
We evaluate Dimension Selection (DS) under three reduction ratios: 50\%, 25\%, and 10\%. Table \ref{tab:ds_widedeep} presents the results.

\textbf{Robustness at Extreme Compression.} 
While most methods perform adequately at moderate compression, performance gaps widen significantly as resources become scarcer. At DR=10\%, ShuffleGate maintains a high $S_{\text{AUC}}$ of 0.9932. In contrast, DimReg relies on magnitude regularization, which fails to preserve critical dimensions under extreme constraints, leading to a performance collapse on AliCCP (AUC 0.6005 vs. ShuffleGate 0.6543).

\textbf{Comparison with AutoDim.}
AutoDim suffers from inflexible compression, resulting at a high retention ratio (DR $\approx$ 77\%) on average. Despite occupying significantly more memory resources than ShuffleGate (DR=50\%), it fails to demonstrate a performance advantage. ShuffleGate achieves a higher aggregate $S_{\text{AUC}}$ (0.9996 vs. 0.9966) and substantially outperforms AutoDim on ML-1M (80.31 vs. 79.43), proving superior parameter efficiency.

\subsubsection{Significance Test}
\label{appx:sig-test}
We perform one-sided paired Wilcoxon signed-rank tests to assess the statistical significance of our method's improvements over the baselines (Table \ref{tab:fs_widedeep} and \ref{tab:ds_widedeep}). The test results are presented in Table \ref{tab:sig_test}. All improvements by our method are shown to be
statistically significant.

\begin{table}[h]
\centering
\caption{ Test results of statistical significance.}
\label{tab:sig_test} 
\begin{tabular}{@{}llr@{}}
\toprule
\textbf{Task} & \textbf{Method} & \textbf{P-Value} \\ \midrule
\textbf{FS}   & LPFS            & 0.0              \\
              & RF              & 0.0              \\
              & SHARK           & 0.0111           \\
              & Autofield       & 0.0001           \\
              & Lasso           & 0.0              \\
              & Sfs             & 0.0              \\
              & XGB             & 0.0              \\ \midrule
\textbf{DS}   & Dim\_Reg        & 0.0005           \\
              & SSEDS           & 0.0001           \\
              & Autodim         & 0.0              \\ \midrule
\end{tabular}
\end{table}

\subsection{Scalability Analysis (RQ2)}
\label{sec:scalability}

In this section, we evaluate whether ShuffleGate can handle industrial-scale recommendation models (e.g., Criteo with 270M parameters) in terms of both computational speed and parameter efficiency.

\subsubsection{The Efficiency Comparison}

We first compare the search time of ShuffleGate against SHARK, the standard permutation-based method. Figure \ref{fig:efficiency} reports the time cost on Criteo.

\textbf{The Efficiency Gap.} On the Feature Selection task (39 features), SHARK requires \textbf{7,492 seconds} due to its sequential inference strategy. In contrast, ShuffleGate completes the task in \textbf{501 seconds}, achieving a \textbf{15$\times$ speedup}. This advantage stems from our batch-wise shuffling mechanism, which allows joint optimization of all importance gates in a single forward-backward pass, whereas SHARK must evaluate each feature greedily.

\begin{figure}[t]
    \centering
    \includegraphics[width=\linewidth]{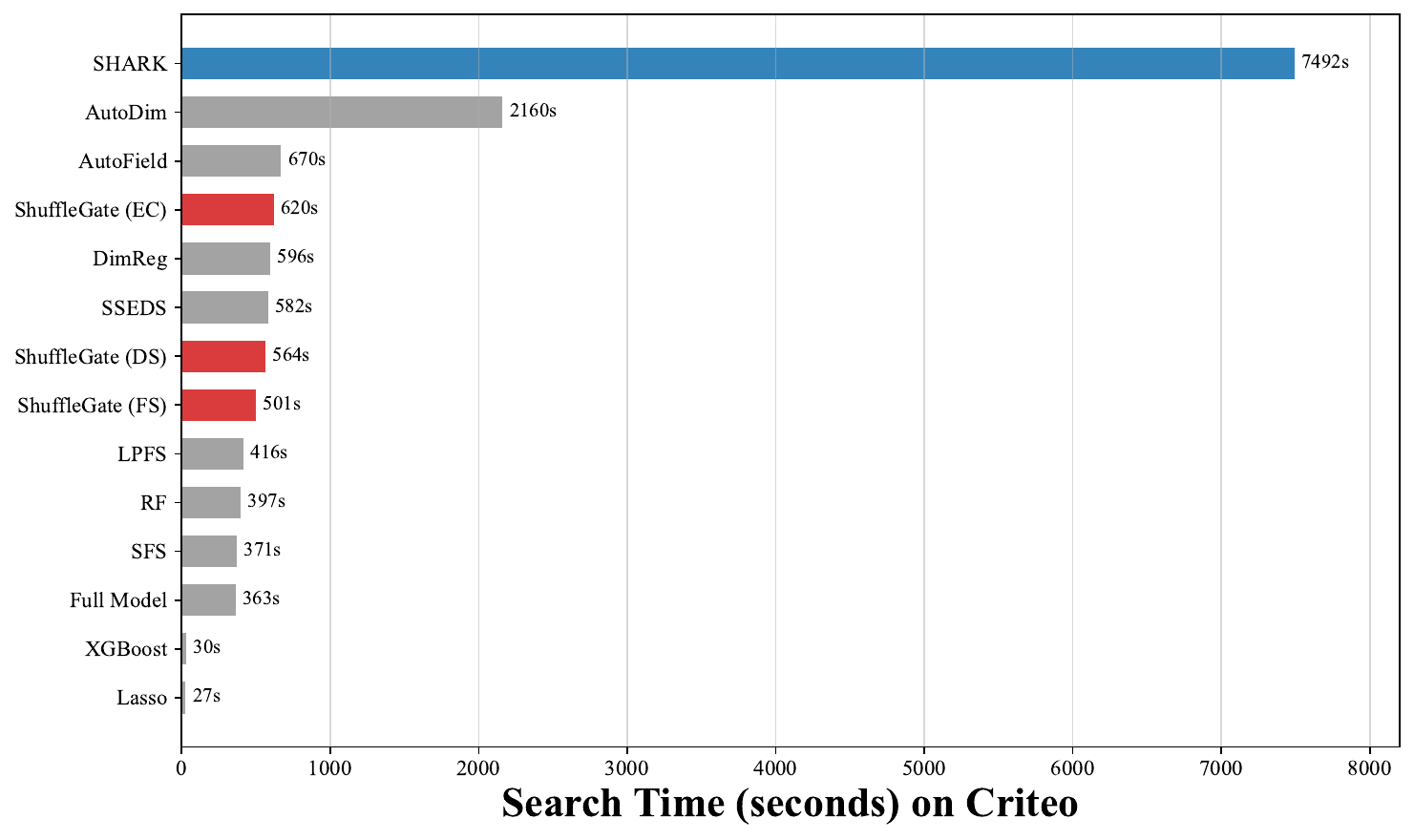}
    \caption{\textbf{Search Time Efficiency on Criteo.} ShuffleGate achieves a 15$\times$ speedup over SHARK on feature selection (39 features). More importantly, its time cost remains constant ($O(1)$) even when scaling to 270 million embedding entries (ShuffleGate-EC), whereas SHARK would be computationally infeasible.}
    \label{fig:efficiency}
\end{figure}

\subsubsection{Parameter Scalability: The Criteo Stress Test}
To further validate scalability, we conduct a "Stress Test" on the Criteo dataset. The gap becomes qualitative when scaling to Embedding Compression (EC), where the search space explodes from 39 features to \textbf{270 million} embedding entries.
Since SHARK's complexity scales linearly with the number of components ($O(N)$), applying it to this task would theoretically take:
\begin{equation}
    7,492 \text{ s} \times \frac{270,000,000}{39} \approx 5.2 \times 10^{10} \text{ s} \approx \textbf{1,600 years}
\end{equation}
This makes traditional permutation methods computationally infeasible for fine-grained compression.
Conversely, ShuffleGate demonstrates $O(1)$ scalability with respect to component count. As shown in Figure \ref{fig:efficiency}, evaluating 270M parameters (ShuffleGate-EC) takes only \textbf{620 seconds} ($\approx$ 10 minutes). ShuffleGate thus transforms a task that would take nearly two millennia into one feasible within minutes.

In terms of  effectiveness, ShuffleGate compresses the embedding table to an extreme density of 0.1\% (i.e., removing 99.9\% of parameters). For this task, we adjust $\alpha$ to 2.0 to maintain the relative regularization magnitude, as the averaging denominator in our loss function expands from 39 features to 270 million embedding parameters.

We do not include comparisons with other embedding compression baselines here, as the primary objective of this experiment is not to achieve state-of-the-art performance in the compression task itself. Instead, it serves as an isolated stress test to demonstrate that our method remains stable and scalable even under extreme large scale.

\begin{table}[h]
    \centering
    \caption{\textbf{Criteo Stress Test (Embedding Compression).} Even with 99.9\% of parameters removed, ShuffleGate achieves a higher AUC than the full model. }
    \label{tab:criteo_stress}
    \small
    \setlength{\tabcolsep}{6pt}
    \begin{tabular}{lccc}
        \toprule
        \textbf{Method} & \textbf{Params Kept} & \textbf{Sparsity} & \textbf{AUC} \\
        \midrule
        Full Model & 100\% & 0\% & 0.8014 \\
        \textbf{ShuffleGate (Ours)} & \textbf{0.1\%} & \textbf{99.9\%} & \textcolor{red}{\textbf{0.8027}} \\
        \bottomrule
    \end{tabular}
\end{table}

As shown in Table \ref{tab:criteo_stress}, ShuffleGate maintains an AUC of \textbf{0.8027}, which is surprisingly \textit{higher} than the full model's AUC (0.8014).
This counter-intuitive result highlights two key findings:
\begin{enumerate}
    \item \textbf{High Redundancy:} Industrial-scale embedding tables contain massive amounts of noise and redundant parameters that do not contribute to (and may even harm) generalization.
    \item \textbf{Precision:} ShuffleGate's sensitivity learning is precise enough to locate important components within a hundred-million-scale parameter space, proving its capability for  industrial scale.
\end{enumerate}

\subsection{Industrial Deployment and Impact (RQ3)}
\label{sec:industrial}

To validate ShuffleGate's adaptability, we deployed it in Bilibili's billion-scale production environment. We report results from two representative scenarios. It is worth noting that while ShuffleGate simultaneously reduces both training and inference resource consumption in both cases, we highlight the specific primary bottleneck that was most critical for each scenario.

\subsubsection{Scenario A: Inference-Resource Heavy}
This scenario involves an online ranking service where the primary constraint was deployment cost and inference latency.
\begin{itemize}
    \item \textbf{Strategy (Feature Deletion):} Applying ShuffleGate solely for feature selection, we identified and physically removed \textbf{60\%+} of redundant features.
    \item \textbf{Impact:} This aggressive "slimming" reduced the model parameter size by \textbf{40\%+}, significantly cutting down computational overhead (FLOPs) for both training and inference. Specifically, it yielded a \textbf{20\%+} boost in online inference speed.
    \item \textbf{Metric Impact:} During a six-month online A/B test, core business metrics (e.g., CTR) remained comparable to the baseline. The metric fluctuations were strictly within 0.1\%, which falls within the range of random fluctuation and indicates no statistically significant degradation.
\end{itemize}

\subsubsection{Scenario B: Training-IO Heavy}
This scenario involves a large-scale recommendation model where the training bottleneck was the CPU-to-GPU bandwidth caused by the massive dimension of the concatenated embedding layer.
\begin{itemize}
    \item \textbf{Strategy (Deletion + Dim. Reduction):} We adopted a dual-compression strategy. Beyond removing non-predictive features, we further compressed the embedding dimensions of retained features based on their ShuffleGate importance scores (reducing them from 64 to 32, 16, or 8).
    \item \textbf{Impact:} This strategy reduced the dimension of the Concatenated Embedding Layer (the input to the MLP) from over \textbf{10,000} to approximately \textbf{1,000}. This 10$\times$ reduction in the input vector effectively unclogged the IO bandwidth. Consequently, the training speed surged by \textbf{91\%}, while inference latency was also reduced due to the smaller input size.
    \item \textbf{Metric Impact:} Despite the significant reduction in information capacity, both offline AUC and online interaction metrics remained comparable within six months. Performance gaps were consistently less than 0.1\%, confirming that the compressed model preserved the predictive power of the original large model.
\end{itemize}

\begin{table}[h]
    \centering
    \caption{\textbf{Industrial Application Summary.} In both cases, computational resources are significantly saved while ensuring comparable business metrics (diff $<$ 0.1\%).}
    \label{tab:industrial_impact}
    \resizebox{\columnwidth}{!}{
    \setlength{\tabcolsep}{4pt}
    \begin{tabular}{l|c|c}
        \toprule
        \textbf{Dimension} & \textbf{Scenario A} & \textbf{Scenario B} \\
        \midrule
        \textit{Primary Bottleneck} & Inference Compute \& Memory & Training IO \& Bandwidth \\
        Strategy & Feature Deletion & Deletion + Dim. Reduction \\
        \textit{Dim. Policy} & Keep Original & \textbf{64 $\to$ 32 / 16 / 8} \\
        \midrule
        Optimization & \textbf{60\%+} Features Removed & Input Dim: \textbf{10k+ $\to$ 1k+} \\
        Primary Gain & Inference Speed $\uparrow$ \textbf{20\%+} & Training Speed $\uparrow$ \textbf{91\%} \\
        \midrule
        Metrics & Comparable (Diff $<$ 0.1\%) & Comparable (Diff $<$ 0.1\%) \\
        \bottomrule
    \end{tabular}
    }
\end{table}

\section{Conclusion}
\label{sec:conclusion}

In this paper, we proposed ShuffleGate, a unified and efficient structure search framework designed for large-scale industrial deep models. Capable of handling both coarse-grained Feature Selection and fine-grained Dimension Selection, ShuffleGate fundamentally addresses the limitations of existing methods. By incorporating the shuffling mechanism into the differentiable training flow, we effectively decouple the gradient interaction between gates and weights. This design guarantees theoretical Self-Polarization, enabling the automatic separation of signal from noise without complex heuristics.

From an engineering perspective, ShuffleGate bridges the long-standing "Search-Retrain Gap." Its unique \textbf{"What-You-See-Is-What-You-Get" (WYSIWYG)} property, combined with a single hyperparameter design ($\alpha$), transforms structure search from a "black-box" trial into a transparent and controllable process. This allows practitioners to reliably estimate final model performance during the search stage, eliminating the need for expensive iterative retraining.

Extensive deployment in Bilibili's billion-scale production environment has demonstrated ShuffleGate's robust adaptability across distinct resource bottlenecks. In inference-constrained scenarios, it served as a feature selector, reducing model size by over \textbf{40\%} via aggressive pruning. In IO-constrained training scenarios, it functioned as a dimension selector, compressing embedding dimensions (e.g., from 64 to 8/16) to break bandwidth limits, achieving a massive \textbf{91\%} increase in training throughput. In all cases, ShuffleGate achieved these significant resource savings while maintaining comparable business metrics, establishing it as a highly effective and reliable solution for industrial model compression.

\bibliographystyle{ACM-Reference-Format}
\bibliography{sample-base}

\appendix
\newpage
\clearpage
\onecolumn
\section{Theoretical Analysis: Mechanism of Self-Polarization}
\label{sec:theoretical_analysis}

A distinguishing feature of ShuffleGate is its natural \textbf{Polarization Effect}. Unlike traditional mask-gate based methods which indiscriminately compress weights, ShuffleGate creates a "Discriminative Margin" where redundant gates collapse to zero while informative ones are pushed towards one. In this section, we analyze the gradient dynamics on the \textit{expected loss} to provide a rigorous guarantee.

\subsubsection{Gradient Dynamics Setup}
Consider the optimization of a single gate $g_i \in [0, 1]$ for feature $i$. Let $J(g_i) = \mathbb{E}_{\mathbf{x} \sim \mathcal{D}}[\mathcal{L}_{\text{task}}(f(\mathbf{x}, g_i))]$ denote the expected task loss over the data distribution $\mathcal{D}$. The total objective function is:
\begin{equation}
    \mathcal{J}_{\text{total}}(g_i) = J(g_i) + \alpha g_i
\end{equation}
The gradient descent update with learning rate $\eta$ is given by:
\begin{equation}
    g_i \leftarrow g_i - \eta \left( \frac{\partial J(g_i)}{\partial g_i} + \alpha \right)
\end{equation}
Since $g_i \ge 0$, the regularization term provides a constant positive pressure $+\alpha$. The convergence direction of $g_i$ is determined by the magnitude of the task gradient $\frac{\partial J}{\partial g_i}$ relative to $\alpha$.

\subsubsection{Noise Suppression Guarantee}
We first define "non-predictive" features from a probabilistic perspective. Intuitively, a feature is non-predictive if its contribution to the loss variation (measured by the gradient) is bounded by the scale of its embedding perturbation.

\begin{definition}[$\epsilon$-Non-Predictive Feature]
\label{def:non_predictive}
Let $\Delta_i = \mathbb{E}[\| \mathbf{e}_i - \tilde{\mathbf{e}}_i \|]$ be the expected embedding distance caused by shuffling. Feature $i$ is $\epsilon$-non-predictive if the gradient of the expected task loss satisfies the following condition:
\begin{equation}
    \left| \frac{\partial J(g_i)}{\partial g_i} \right| \le \epsilon \cdot \Delta_i, \quad \forall g_i \in [0, 1]
\end{equation}
Here, $\epsilon$ represents the sensitivity coefficient of the model to the feature's noise.
\end{definition}

\begin{theorem}[Noise Suppression]
\label{thm:noise_suppression}
For an $\epsilon$-non-predictive feature, if the regularization coefficient satisfies $\alpha > \epsilon \Delta_i$, then the total gradient is strictly positive, driving $g_i$ to 0.
\end{theorem}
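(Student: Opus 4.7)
The plan is to reduce the theorem to a direct comparison of signed gradient magnitudes, since the regularization provides a constant positive push while Definition~\ref{def:non_predictive} bounds how strongly the task loss can pull the gate upward. Because $\mathcal{J}_{\text{total}}(g_i)=J(g_i)+\alpha g_i$ is a one-dimensional optimization problem in $g_i$ with convex penalty, everything reduces to showing that $\partial \mathcal{J}_{\text{total}}/\partial g_i > 0$ uniformly on $[0,1]$, from which monotone descent toward $0$ follows.

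First, I would write out the total gradient explicitly as
\begin{equation}
\frac{\partial \mathcal{J}_{\text{total}}(g_i)}{\partial g_i} \;=\; \frac{\partial J(g_i)}{\partial g_i} + \alpha.
\end{equation}
Then I would invoke Definition~\ref{def:non_predictive}, which gives the two-sided bound $-\epsilon\Delta_i \le \partial J(g_i)/\partial g_i \le \epsilon\Delta_i$ for every $g_i\in[0,1]$. Taking the worst-case (most negative) value of the task gradient yields
\begin{equation}
\frac{\partial \mathcal{J}_{\text{total}}(g_i)}{\partial g_i} \;\ge\; \alpha - \epsilon\Delta_i \;>\; 0,
\end{equation}
where the final strict inequality uses the hypothesis $\alpha > \epsilon \Delta_i$. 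This strict positivity holds uniformly in $g_i$, which is the key structural observation.

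Next, I would translate the sign of the gradient into a convergence statement. Since the update rule $g_i \leftarrow g_i - \eta(\partial J/\partial g_i + \alpha)$ subtracts a strictly positive quantity at every step, the iterate is monotonically decreasing; combined with the fact that $g_i$ is constrained (either by the sigmoid parameterization $g=\sigma(\tau\phi)$ in the implementation, which enforces $g\in(0,1)$, or by explicit clipping in the analysis), $g_i$ must converge to the lower boundary, i.e.\ $g_i \to 0$. For sufficiently small learning rate $\eta$, one can additionally quantify a per-step decrement of at least $\eta(\alpha-\epsilon\Delta_i)$, yielding a concrete rate until the boundary is hit.

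The main obstacle I expect is the boundary behavior rather than the gradient bound itself: for the sigmoid parameterization the gate only reaches $0$ asymptotically as the underlying logit $\phi\to-\infty$, so one has to argue that the uniform positive-gradient pressure forces $\phi$ to diverge to $-\infty$ monotonically, hence $g_i\to 0$ in the limit. A secondary subtlety is that Definition~\ref{def:non_predictive} is stated for the \emph{expected} task loss $J$, while training uses stochastic mini-batch estimates; I would note that the theorem is correspondingly a statement about the expected dynamics, and mention that the batch-wise shuffle in Section~\ref{sec:shuffle_op} keeps the variance of $\partial J/\partial g_i$ controlled, so the deterministic argument is a faithful description of the long-run behavior.
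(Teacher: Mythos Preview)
Your proposal is correct and follows essentially the same route as the paper: bound $\partial J/\partial g_i \ge -\epsilon\Delta_i$ from Definition~\ref{def:non_predictive}, add $\alpha$, and use $\alpha>\epsilon\Delta_i$ to get a strictly positive total gradient that drives $g_i$ down to the boundary. Your extra remarks on the sigmoid parameterization (asymptotic rather than finite-time convergence to $0$) and on expected versus stochastic dynamics go beyond what the paper actually proves and are valid caveats, but they are not required for the theorem as stated.
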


\begin{proof}
The proof follows directly from the gradient composition:
\begin{enumerate}
    \item \textbf{Gradient Bound:} From Definition \ref{def:non_predictive}, the negative task gradient is lower-bounded by:
    \begin{equation}
        \frac{\partial J(g_i)}{\partial g_i} \ge - \left| \frac{\partial J(g_i)}{\partial g_i} \right| \ge - \epsilon \Delta_i
    \end{equation}
    \item \textbf{Total Gradient Construction:} Substituting this into the total gradient expression:
    \begin{equation}
        \frac{\partial \mathcal{J}_{\text{total}}}{\partial g_i} = \frac{\partial J(g_i)}{\partial g_i} + \alpha \ge -\epsilon \Delta_i + \alpha
    \end{equation}
    \item \textbf{Conclusion:} Since we set $\alpha > \epsilon \Delta_i$, it follows that:
    \begin{equation}
        \frac{\partial \mathcal{J}_{\text{total}}}{\partial g_i} > 0
    \end{equation}
\end{enumerate}
Consequently, the optimizer will consistently decrease $g_i$ until it hits the lower bound 0.
\end{proof}

\subsubsection{Signal Preservation Guarantee}
Conversely, for a predictive feature, shuffling destroys valid information, causing a significant increase in the expected loss. We show that if this loss gap exceeds $\alpha$, the gate is preserved.

\begin{theorem}[Signal Preservation]
\label{thm:signal_preservation}
Assume the expected task loss $J(g_i)$ is convex w.r.t. $g_i$. Let $\Delta J = J(0) - J(1)$ be the expected signal strength (loss increase due to shuffling). If $\Delta J > \alpha$, then the total gradient is negative, driving $g_i$ to 1.
\end{theorem}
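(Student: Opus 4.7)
The plan is to exploit the convexity assumption on $J$ to mirror the structure of Theorem~\ref{thm:noise_suppression} in the reverse direction. The total gradient is $\partial \mathcal{J}_{\text{total}}/\partial g_i = J'(g_i) + \alpha$, and convexity of $J$ transfers to $\mathcal{J}_{\text{total}}$ because the added regularizer is linear. The goal is to show this gradient is negative over the relevant range of $g_i$, driving gradient descent to push $g_i$ upward.

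First, I would apply the tangent-line inequality of convex functions at the lower boundary: $J(1) \ge J(0) + J'(0)\cdot 1$, which rearranges to $J'(0) \le J(1) - J(0) = -\Delta J$. Combined with the hypothesis $\Delta J > \alpha$, this immediately yields $\partial \mathcal{J}_{\text{total}}/\partial g_i \big|_{g_i=0} \le -\Delta J + \alpha < 0$. Hence the gradient is strictly negative at the low end, proving the conclusion locally at $g_i = 0$. To extend the push across the interior, I would combine two facts: (a) the endpoint comparison $\mathcal{J}_{\text{total}}(1) - \mathcal{J}_{\text{total}}(0) = -\Delta J + \alpha < 0$, so $g_i = 1$ strictly dominates $g_i = 0$; and (b) convexity makes $J'(g_i) + \alpha$ non-decreasing in $g_i$, so its sign flips at most once on $[0,1]$. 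Together these imply the minimizer $g^{*}$ of $\mathcal{J}_{\text{total}}$ on $[0,1]$ lies in $(0, 1]$, and the total gradient is negative throughout $[0, g^{*})$, steering $g_i$ monotonically upward toward $g^{*}$.

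The main obstacle is bridging the gap between what plain convexity yields (convergence to some $g^{*} \in (0, 1]$) and the theorem's literal statement that $g_i$ is driven to $1$. A convex $J$ can flatten in the interior, leaving an optimum strictly below $1$; for instance, $J(g) = -2g + g^2$ with $\alpha = 0.5$ satisfies $\Delta J = 1 > \alpha$ yet has $g^{*} = 0.75$. I would therefore either strengthen the hypothesis with a mild monotonicity condition such as $J'(1) \le -\alpha$ — natural for a predictive feature whose expected loss keeps decreasing as its information channel opens — which forces $g^{*} = 1$ and the gradient to be negative on all of $[0,1)$; or restate the conclusion qualitatively as ``driven toward the upper regime,'' matching the paper's polarization narrative and the symmetric structure of Theorem~\ref{thm:noise_suppression}. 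Either route preserves the key takeaway: under $\Delta J > \alpha$, the signal-preservation direction is robust and the gate cannot collapse to zero.
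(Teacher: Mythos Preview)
Your core move---bound $J'$ via convexity, then add $\alpha$---is the same as the paper's, but you are considerably more careful about \emph{where} the bound holds. The paper writes the below-the-chord inequality $J(g_i) - J(0) \le g_i\bigl(J(1)-J(0)\bigr)$ and then asserts that ``dividing by $g_i$ (and taking the limit $g_i \to 0$), or simply differentiating the convex inequality'' yields $J'(g_i) \le -\Delta J$ as a \emph{global} bound on $[0,1]$, from which the negative total gradient follows everywhere. You correctly localize this to $g_i=0$ via the tangent-line inequality, and your counterexample $J(g)=-2g+g^2$ with $\alpha=0.5$ shows the paper's global bound is false: for a convex function the chord slope upper-bounds $J'(0)$ but \emph{lower}-bounds $J'(1)$, so the minimizer can sit strictly inside $(0,1)$.

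In short, the paper's proof glosses over exactly the obstacle you identify; its ``differentiate the inequality'' step is not a valid operation, and the limit argument only recovers the bound at the left endpoint. Your two proposed resolutions---either add the endpoint condition $J'(1)\le -\alpha$, or soften the conclusion to ``driven toward the upper regime / bounded away from zero''---are both legitimate and would make the argument rigorous; the paper adopts neither and simply asserts the stronger conclusion. Your analysis is correct and sharper than the paper's own.
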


\begin{proof}
We utilize the first-order property of convex functions.
\begin{enumerate}
    \item \textbf{Convexity Inequality:} Since $J(g_i)$ is convex, for any $g_i \in [0, 1]$, the curve lies below its chord connecting endpoints $g_i=0$ and $g_i=1$:
    \begin{equation}
        J(g_i) \le g_i J(1) + (1-g_i) J(0)
    \end{equation}
    \item \textbf{Derivative Upper Bound:} Rearranging the inequality terms:
    \begin{equation}
        J(g_i) - J(0) \le g_i (J(1) - J(0))
    \end{equation}
    Differentiating the convex inequality w.r.t $g_i$, implies that the gradient is upper-bounded by the slope of the chord:
    \begin{equation}
        \frac{\partial J(g_i)}{\partial g_i} \le J(1) - J(0) = - (J(0) - J(1)) = -\Delta J
    \end{equation}
    \item \textbf{Total Gradient \& Conclusion:} Adding the regularization term $\alpha$:
    \begin{equation}
        \frac{\partial \mathcal{J}_{\text{total}}}{\partial g_i} = \frac{\partial J(g_i)}{\partial g_i} + \alpha \le -\Delta J + \alpha
    \end{equation}
    Since the signal strength satisfies $\Delta J > \alpha$, we have $\frac{\partial \mathcal{J}_{\text{total}}}{\partial g_i} < 0$. Thus, the optimizer will increase $g_i$, pushing it towards the upper bound 1.
\end{enumerate}
\end{proof}

\newpage

\section{Experiment Supplement}

\subsection{Backbone Study}
\label{bb-study}
For the Feature Selection experiemnt, we provide the
retrain auc results based on DCN , which is available in Table \ref{tab:fs-full-1} and \ref{tab:fs-full-2}.

\begin{table*}
\centering
\caption{Feature Selection Results with DCN Backbone (Field Retention Ratio = 50\%). Best AUC results among feature selection methods are highlighted in \textcolor{red}{\textbf{red}}. $^{\dagger}$Reference baseline (not included in comparison).}
\label{tab:fs-full-1}
\setlength{\tabcolsep}{4pt}
\begin{tabular}{l|cc|cc|cc|cc}
\toprule
\multirow{2}{*}{\textbf{Method}} & \multicolumn{2}{c|}{\textbf{Criteo}} & \multicolumn{2}{c|}{\textbf{Avazu}} & \multicolumn{2}{c|}{\textbf{Aliccp}} & \multicolumn{2}{c}{\textbf{MovieLens-1M}} \\
& AUC & FR & AUC & FR & AUC & FR & AUC & FR \\
\midrule
\textit{No\_Select}$^{\dagger}$ & \textit{0.8017} & 1.000 & \textit{0.7884} & 1.000 & \textit{0.6574} & 1.000 & \textit{0.7949} & 1.000 \\
& & & & & & & & \\[-0.5em]
ShuffleGate & \textcolor{red}{\textbf{0.7991}} & 0.487 & \textcolor{red}{\textbf{0.7880}} & 0.500 & \textcolor{red}{\textbf{0.6576}} & 0.478 & \textcolor{red}{\textbf{0.8102}} & 0.444 \\
AutoField & 0.7983 & 0.487 & 0.7879 & 0.500 & 0.6574 & 0.478 & 0.8090 & 0.444 \\
SHARK & 0.7975 & 0.487 & 0.7878 & 0.500 & 0.6571 & 0.478 & \textcolor{red}{\textbf{0.8102}} & 0.444 \\
SFS & 0.7969 & 0.487 & 0.7857 & 0.500 & 0.6563 & 0.478 & 0.8090 & 0.444 \\
LPFS & 0.7970 & 0.487 & 0.7806 & 0.500 & 0.6495 & 0.478 & 0.8020 & 0.444 \\
Random Forest & 0.7932 & 0.487 & 0.7878 & 0.500 & 0.6556 & 0.478 & 0.7949 & 0.444 \\
XGBoost & 0.7686 & 0.487 & 0.7414 & 0.500 & 0.6507 & 0.478 & \textcolor{red}{\textbf{0.8102}} & 0.444 \\
Lasso & 0.7465 & 0.487 & 0.7092 & 0.500 & 0.6046 & 0.478 & 0.6476 & 0.444 \\
\bottomrule
\end{tabular}
\end{table*}

\begin{table*}
\centering
\caption{Feature Selection Results with DCN Backbone (Field Retention Ratio = 25\%). Best AUC results among feature selection methods are highlighted in \textcolor{red}{\textbf{red}}. $^{\dagger}$Reference baseline (not included in comparison).}
\label{tab:fs-full-2}
\setlength{\tabcolsep}{4pt}
\begin{tabular}{l|cc|cc|cc|cc}
\toprule
\multirow{2}{*}{\textbf{Method}} & \multicolumn{2}{c|}{\textbf{Criteo}} & \multicolumn{2}{c|}{\textbf{Avazu}} & \multicolumn{2}{c|}{\textbf{Aliccp}} & \multicolumn{2}{c}{\textbf{MovieLens-1M}} \\
& AUC & FR & AUC & FR & AUC & FR & AUC & FR \\
\midrule
\textit{No\_Select}$^{\dagger}$ & \textit{0.8017} & 1.000 & \textit{0.7884} & 1.000 & \textit{0.6574} & 1.000 & \textit{0.7949} & 1.000 \\
& & & & & & & & \\[-0.5em]
ShuffleGate & \textcolor{red}{\textbf{0.7857}} & 0.231 & \textcolor{red}{\textbf{0.7742}} & 0.227 & \textcolor{red}{\textbf{0.6496}} & 0.217 & \textcolor{red}{\textbf{0.8077}} & 0.222 \\
AutoField & 0.7766 & 0.231 & 0.7731 & 0.227 & 0.6472 & 0.217 & 0.8076 & 0.222 \\
SHARK & 0.7696 & 0.231 & 0.7731 & 0.227 & 0.6462 & 0.217 & 0.8076 & 0.222 \\
LPFS & 0.7690 & 0.231 & 0.7713 & 0.227 & 0.6167 & 0.217 & 0.8076 & 0.222 \\
SFS & 0.7600 & 0.231 & 0.7724 & 0.227 & 0.6452 & 0.217 & 0.8076 & 0.222 \\
Random Forest & 0.7639 & 0.231 & 0.7646 & 0.227 & 0.6049 & 0.217 & 0.6974 & 0.222 \\
XGBoost & 0.7157 & 0.231 & 0.7093 & 0.227 & 0.5867 & 0.217 & 0.7337 & 0.222 \\
Lasso & 0.7005 & 0.231 & 0.6016 & 0.227 & 0.5801 & 0.217 & 0.5311 & 0.222 \\
\bottomrule
\end{tabular}
\end{table*}

\subsection{Mechanism Analysis}
\label{sec:mechanism}

To demystify the internal decision-making process of ShuffleGate, we conduct an in-depth case study on the MovieLens-1M dataset. For this analysis, we adopt $\alpha=0.03$, a setting selected based on validation performance to strike an optimal balance between sufficient noise filtration and signal preservation.

\subsubsection{Visualizing Polarization}
Figure \ref{fig:mechanism_polarization} visualizes the learned gate values for all features. ShuffleGate exhibits a strong \textbf{polarization effect}, creating a sharp dichotomy between useful and useless features:
\begin{itemize}
    \item \textbf{Selected Zone:} Features like \texttt{Title} and \texttt{User\_ID} are assigned gates close to 1.0, while \texttt{Genres} is safely maintained above the threshold (0.5).
    \item \textbf{Suppressed Zone:} Irrelevant or redundant features (e.g., \texttt{Movie\_ID}, \texttt{Timestamp}) are aggressively compressed to near-zero values ($< 10^{-6}$).
\end{itemize}
This "black-and-white" distribution eliminates the ambiguity often found in traditional soft-gating methods, enabling decisive pruning.

\subsubsection{Stepwise Validation and Insights}
To validate whether the polarization aligns with true feature utility, we perform a two-phase validation as shown in Table \ref{tab:mechanism_validation}. First, we cumulatively add "Selected" features. Second, we individually add "Suppressed" features back to the peak model to verify their redundancy.

\begin{table}[t]
    \centering
    \caption{\textbf{Mechanism Validation.} We first build the model using high-gate features (Steps 1-3), reaching peak performance. We then verify the "suppressed" features (Steps 4-9) by adding them individually to the peak model. Results show they are either redundant, negligible, or harmful.}
    \label{tab:mechanism_validation}
    \resizebox{0.5\columnwidth}{!}{
    \setlength{\tabcolsep}{4pt}
    \begin{tabular}{clccc} 
        \toprule
        \textbf{Step} & \textbf{Feature Set} & \textbf{Gate} & \textbf{AUC} & \textbf{$\Delta$AUC} \\
        \midrule
        \multicolumn{5}{l}{\textit{\textbf{Part 1: Construction (High Gate > 0.5)}}} \\
        \cmidrule(r){1-5} 
        1 & [Title] & 0.995 & 0.7336 & - \\
        2 & + User\_ID & 0.941 & 0.8073 & \textcolor{darkgreen}{+0.0737} \\ 
        \textbf{3} & \textbf{+ Genres (Peak Model)} & \textbf{0.757} & \textcolor{red}{\textbf{0.8105}} & \textcolor{darkgreen}{\textbf{+0.0032}} \\
        \midrule
        \multicolumn{5}{l}{\textit{\textbf{Part 2: Verification (Base = Peak Model + Feature X)}}} \\
        \cmidrule(r){1-5}
        4 & + Movie\_ID & $\approx 0.0$ & 0.8099 & \textcolor{gray}{-0.0006} \\
        5 & + Zip & $\approx 0.0$ & 0.8105 & \textcolor{gray}{0.0000} \\
        6 & + Age & $\approx 0.0$ & 0.8106 & +0.0001 \\
        7 & + Occupation & $\approx 0.0$ & 0.8106 & +0.0001 \\
        8 & + Gender & $\approx 0.0$ & 0.8107 & +0.0002 \\
        9 & + Timestamp & $\approx 0.0$ & \textbf{0.7938} & \textcolor{blue}{\textbf{-0.0167}} \\
        \bottomrule
    \end{tabular}
    }
\end{table}

\textbf{1. The "Safety Margin" for Weak Signals.}
A key strength of ShuffleGate is preserving weak but informative features. \texttt{Genres} (Gate 0.757) is not as dominant as \texttt{Title}, but adding it improves AUC from 0.8073 to \textbf{0.8105}. This confirms that ShuffleGate successfully establishes a "safety margin," retaining weak signals that contribute to generalization.

\textbf{2. Conditional Redundancy.}
Adding \texttt{Movie\_ID} (Gate $\approx 0$) back to the peak model drops the AUC slightly to 0.8099. This confirms that the information in \texttt{Movie\_ID} is fully redundant given \texttt{Title}.

\textbf{3. Pruning Negligible Features (Occam's Razor).}
Features like \texttt{Zip}, \texttt{Age}, \texttt{Occupation}, and \texttt{Gender} are all suppressed to near-zero. As shown in Table \ref{tab:mechanism_validation} (Steps 5-8), adding them back yields negligible AUC fluctuations ($0.0000 \sim +0.0002$). While technically neutral, keeping them incurs memory costs without meaningful performance gains. ShuffleGate's sensitivity objective effectively applies Occam's Razor, trimming these statistically insignificant parameters to maximize efficiency.

\textbf{4. Filtering Harmful Noise.}
Crucially, adding \texttt{Timestamp} (Gate $\approx 0$) causes a significant performance drop (AUC $0.8105 \to 0.7938$). This identifies \texttt{Timestamp} as harmful noise (negative transfer), which ShuffleGate correctly filters out to prevent overfitting.

\end{document}